\newcommand{\gGt}{\widetilde{\gG}}
\newcommand{\Erdos}{Erdős} 
\newcommand{\Renyi}{Rényi }
\renewcommand{\exp}[1]{e^{#1}}
\newcommand{\lexp}[1]{{\rm exp}\left({#1}\right)}
\newcommand{\smp}{\hspace{0.01in}}
\newcommand{\rhp}{\hspace{0in}}
\newcommand{\yyt}{\vy \smp \vy^\top}
\newcommand{\hmY}{\widehat{\mY}}
\newcommand{\Dmax}{\Delta_{\max}}
\newcommand{\Mat}{\mM}
\newcommand{\Rnn}{\gR^{n\times n}}
\newcommand{\Rn}{\gR^n}
\newcommand{\vOne}{\mathbf{1}}
\renewcommand{\O}{\mathcal{O}}
\newcommand{\Xt}{{\rmX}}
\newcommand{\Ind}[1]{\ensuremath{\mathbbm{1} \hspace{-0.03in} \left[#1\right]}}                     
\newcommand{\Norm}[1]{\ensuremath{\lVert #1 \rVert}}                  
\newcommand{\InNorm}[1]{{\left\vert\kern-0.2ex\left\vert\kern-0.2ex\left\vert #1 
    \right\vert\kern-0.2ex\right\vert\kern-0.2ex\right\vert}}                    
\newcommand{\InNormII}[1]{{\left\vert\kern-0.2ex\left\vert\kern-0.2ex\left\vert #1 
    \right\vert\kern-0.2ex\right\vert\kern-0.2ex\right\vert}_2}                    
\newcommand{\InNormInfty}[1]{{\left\vert\kern-0.2ex\left\vert\kern-0.2ex\left\vert #1 
    \right\vert\kern-0.2ex\right\vert\kern-0.2ex\right\vert}_{\infty}}           
\newcommand{\iid}{i.i.d.~}                                                        
\newcommand{\Inner}[2]{\langle #1, #2 \rangle}                    
\DeclareMathOperator*{\union}{\cup}
\newcommand{\ceil}[1]{\left\lceil#1\right\rceil}
\newtheorem{definition}{Definition}
\newtheorem{lemma}{Lemma}
\newtheorem{theorem}{Theorem}
\newtheorem{remark}{Remark}
\newtheorem{corollary}{Corollary}
\newcommand{\subto}{\mathrm{subject\ to}}
\def\ceil#1{\lceil #1 \rceil}
\def\1{\bm{1}}
\def\rz{{\textnormal{z}}}
\def\rva{{\mathbf{a}}}
\def\rvc{{\mathbf{c}}}
\def\erva{{\textnormal{a}}}
\def\ervb{{\textnormal{b}}}
\def\ervc{{\textnormal{c}}}
\def\rmA{{\mathbf{A}}}
\def\rmH{{\mathbf{H}}}
\def\rmN{{\mathbf{N}}}
\def\rmV{{\mathbf{V}}}
\def\rmX{{\mathbf{X}}}
\def\ermA{{\textnormal{A}}}
\def\ermB{{\textnormal{B}}}
\def\ermX{{\textnormal{X}}}
\def\va{{\bm{a}}}
\def\vc{{\bm{c}}}
\def\ve{{\bm{e}}}
\def\vw{{\bm{w}}}
\def\vx{{\bm{x}}}
\def\vy{{\bm{y}}}
\def\eva{{a}}
\def\evc{{c}}
\def\evw{{w}}
\def\evx{{x}}
\def\evy{{y}}
\def\mA{{\bm{A}}}
\def\mD{{\bm{D}}}
\def\mL{{\bm{L}}}
\def\mM{{\bm{M}}}
\def\mV{{\bm{V}}}
\def\mX{{\bm{X}}}
\def\mY{{\bm{Y}}}
\DeclareMathAlphabet{\mathsfit}{\encodingdefault}{\sfdefault}{m}{sl}
\SetMathAlphabet{\mathsfit}{bold}{\encodingdefault}{\sfdefault}{bx}{n}
\def\gE{{\mathcal{E}}}
\def\gG{{\mathcal{G}}}
\def\gK{{\mathcal{K}}}
\def\gM{{\mathcal{M}}}
\def\gO{{\mathcal{O}}}
\def\gR{{\mathcal{R}}}
\def\gS{{\mathcal{S}}}
\def\gV{{\mathcal{V}}}
\def\emY{{Y}}
\newcommand{\tand}{\mathrm{and}}
\newcommand{\tor}{\mathrm{or}}
\newcommand{\E}{\mathbb{E}}
\DeclareMathOperator*{\argmax}{arg\,max}
\DeclareMathOperator{\Tr}{Tr}
\author{%
	Kevin Bello\\
	Department of Computer Science\\
	Purdue Univeristy\\
	West Lafayette, IN 47906, USA \\
	\texttt{kbellome@purdue.edu} \\
   \And
	Jean Honorio\\
	Department of Computer Science\\
	Purdue Univeristy\\
	West Lafayette, IN 47906, USA \\
	\texttt{jhonorio@purdue.edu} \\
}
\title{Exact inference in structured prediction}
\begin{document}

\maketitle

\begin{abstract}
Structured prediction can be thought of as a simultaneous prediction of multiple labels. 
This is often done by maximizing a score function on the space of labels, which decomposes as a sum of pairwise and unary potentials.
The above is naturally modeled with a graph, where edges and vertices are related to pairwise and unary potentials, respectively.
We consider the generative process proposed by \citet{globerson2015hard} and apply it to general connected graphs.
We analyze the structural conditions of the graph that allow for the exact recovery of the labels.
Our results show that exact recovery is possible and achievable in polynomial time for a large class of graphs.
In particular, we show that graphs that are bad expanders can be exactly recovered by adding small edge perturbations coming from the \Erdos-\Renyi model.
Finally, as a byproduct of our analysis, we provide an extension of Cheeger's inequality.
\end{abstract}


\section{Introduction}
	Throughout the years, structured prediction has been continuously used in multiple domains such as computer vision, natural language processing, and computational biology. 
	Examples of structured prediction problems include dependency parsing, image segmentation, part-of-speech tagging, named entity recognition, and protein folding. 
	In this setting, the input $\mX$ is some observation, e.g., social network, an image, a sentence. 
	The output is a labeling $\vy$, e.g., an assignment of each individual of a social network to a cluster, or an assignment of each pixel in the image to foreground or background, or the parse tree for the sentence.
	A common approach to structured prediction is to exploit local features to infer the global structure.
	For instance, one could include a feature that encourages two individuals of a social network to be assigned to different clusters whenever there is a strong disagreement in opinions about a particular subject.
	Then, one can define a posterior distribution over the set of possible labelings conditioned on the input.
	Some classical methods for learning the parameters of the model are conditional random fields  \citep{lafferty2001conditional} and structured support vector machines \citep{Taskar03,tsochantaridis2005large,Altun03}.
	In this work we will focus in the inference problem and assume that the model parameters have been already learned.
	
	In the context of Markov random fields (MRFs), for an undirected graph $\gG = (\gV,\gE)$, one is interested in finding a solution to the following inference problem:
	\begin{align}
	\label{eq:mrf_inference}
		\max_{\vy \in \gM^{|\gV|}}  
		\sum_{v \in \gV, m \in \gM} c_v(m) \Ind{\evy_v=m} 
		+
		\sum_{(u,v) \in \gE, m_1,m_2 \in \gM} \hspace{-0.2in} c_{u,v}(m,n) \Ind{\evy_u=m, \evy_v=n},
	\end{align}
	where $\gM$ is the set of possible labels, $c_u(m)$ is the cost of assigning label $m$ to node $v$, and
	$c_{u,v}(m,n)$ is the cost of assigning $m$ and $n$ to the neighbors $u,v$ respectively.\footnote{In the literature, the cost functions $c_{v}$ and $c_{u,v}$ are also known as unary and pairwise potentials respectively.}
	Similar inference problems arise in the context of statistical physics, sociology, community detection, average case analysis, and graph partitioning.
	Very few cases of the general MRF inference problem are known to be exactly	solvable in polynomial time.
	For example, \citet{chandrasekaran2008complexity} showed that \eqref{eq:mrf_inference} can be solved exactly in polynomial time for a graph $\gG$ with low treewidth via the junction tree algorithm.
	While in the case of Ising models, \citet{schraudolph2009efficient} showed that the inference problem can also be solved exactly in polynomial time for planar graphs via perfect matchings.
	Finally, polynomial-time solvability can also stem from properties of the pairwise potential, under this view, the inference problem can be solved exactly in polynomial time via graph cuts for binary labels and sub-modular pairwise potentials \citep{boykov2006graph}.

	Despite the intractability of maximum likelihood estimation, maximum a-posteriori estimation, and marginal inference for most models in the worst case, the inference task seems to be easier in practice than the theoretical worst case. 
	Approximate inference algorithms can be extremely effective, often obtaining state-of-the-art results for these structured prediction tasks.
	Some important theoretical and empirical work on approximate inference include \citep{foster2018inference,globerson2015hard,Kulesza07,sontag2012efficiently,koo2010dual,daume2009search}.
	
	In particular, \citet{globerson2015hard} analyzes the hardness of approximate inference in the case where performance is measured through the  Hamming error, and provide conditions for the minimum-achievable Hamming error by studying a generative model.
	Similar to the objective \eqref{eq:mrf_inference}, the authors in \citep{globerson2015hard} consider unary and pairwise noisy observations.
	As a concrete example \citep{foster2018inference}, consider the problem of trying to recover opinions of individuals in social networks.
	Suppose that every individual in a social network can hold one of two opinions labeled by $-1$ or $+1$. 
	One observes a measurement of whether neighbors in the network have an agreement in opinion, but the value of each measurement is flipped with probability $p$ (pairwise observations). 
	Additionally, one receives estimates of the opinion of each individual, perhaps using a classification model on their profile, but these estimates are corrupted with probability $q$ (unary observations).
	\citet{foster2018inference} generalizes the work of \citet{globerson2015hard}, who provides results for grid lattices, by providing results for trees and general graphs that allow tree decompositions (e.g., hypergrids and ring lattices).
	
	Note that the above problem is challenging since there is a \textit{statistical} and \textit{computational} trade-off, as in several machine learning problems.
	The statistical part focuses on giving highly accurate labels while ignoring computational constraints.
	In practice this is unrealistic, one cannot afford to wait long times for each prediction, which motivated several studies on this trade-off (e.g., \citet{chandrasekaran2013computational,bello2018learning}).
	
	However, while the {statistical} and {computational} trade-off appears in general,
	an interesting question is whether there are conditions for when recovery of the true labels is achievable in polynomial time.
	That is, conditions for when the Hamming error of the prediction is zero and can be obtained efficiently. 
	The present work addresses this question. 
	In contrast to \citep{globerson2015hard,foster2018inference}, we study the sufficient conditions for exact recovery in polynomial time, and provide high probability results for \textit{general} families of undirected connected graphs, which we consider to be a novel result to the best of our knowledge.
	In particular, we show that weak-expander graphs (e.g., grids) can be exactly recovered by adding small perturbations (edges coming from the \Erdos-\Renyi model with small probability).
	Also, as a byproduct of our analysis, we provide an extension of Cheeger's inequality \citep{cheeger1969lower}.
	Finally, another work in this line was done by \citet{chen2016community}, where the authors consider exact recovery for edges on sparse graphs such as grids and rings.
	However, \citep{chen2016community} consider the case where one has multiple \iid observations of edge labels. 
	In contrast, we focus on the case where there is a single (noisy) observation of each edge and node in the graph.

\section{Notation and Problem Formulation}
	
	This section introduces the notation used throughout the paper and formally defines the problem under analysis.
	
	Vectors and matrices are denoted by lowercase and uppercase bold faced letters respectively (e.g., $\va,\mA$), while scalars are in normal font weight (e.g., $a$).
	Moreover, random variables are written in upright shape (e.g., $\rva, \rmA$).
	For a random vector $\rva$, and a random matrix $\rmA$, their entries are denoted by $\erva_i$ and $\ermA_{i,j}$ respectively.
	Indexing starts at $1$, with $\rmA_{i,:}$ and $\rmA_{:,i}$ indicating the $i$-th row and $i$-th column of  $\rmA$ respectively.
	Finally, sets and tuples are both expressed in uppercase calligraphic fonts and shall be distinguished by the context.
	For example, $\gR$ will denote the set of real numbers.
	
	We now present the inference task.
	We consider a similar problem setting to the one in \citep{globerson2015hard}, with the only difference that we consider general undirected graphs.
	That is, the goal is to predict a vector of $n$ node labels $\vy = (\evy_1, \dots, \evy_n)^\top$, where $\evy_i \in \{+1,-1\}$, from a set of observations $\rmX$ and $\rvc$, where $\rmX$ and $\rvc$ correspond to corrupted measurements of edges and nodes respectively.
	These observations are assumed to be generated from a ground truth labeling $\vy^*$ by a generative process defined via an undirected connected graph $\gG = (\gV,\gE)$, an edge noise $p \in (0,0.5)$, and a node noise $q \in (0,0.5)$.
	For each edge $(u,v) \in \gE$, the edge observation $\ermX_{u,v}$ is independently sampled to be $\evy^*_u \smp \evy^*_v$ (\textit{good edge}) with probability $1-p$, and $-\evy^*_u \smp \evy^*_v$ (\textit{bad edge}) with probability $p$.
	While for each edge $(u,v) \notin \gE$, the observation $\ermX_{u,v}$ is always $0$.
	Similarly, for each node $u \in \gV$, the node observation $\ervc_u$ is independently sampled to be $\evy^*_u$ (\textit{good node}) with probability $1-q$, and $-\evy^*_u$ (\textit{bad node}) with probability $q$.
	Thus, we have a \textit{known} undirected connected graph $\gG$, an \textit{unknown} ground truth label vector $\vy^* \in \{+1,-1\}^n$, and noisy observations $\rmX \in \{-1,0,+1\}^{n\times n}$ and $\rvc \in \{-1,+1\}^n$, and our goal is to predict a vector label $\vy \in \{-1,+1\}^n$.
	
	\begin{definition}[Biased Rademacher variable]
	Let $\rz_p \in \{+1,-1\}$ such that $P(\rz_p = +1) = 1-p$, and $P(\rz_p = -1) = p$. 
	We call $\rz_p$ a biased Rademacher random variable with parameter $p$ and expected value $1-2p$.
	\end{definition}
	From the definition above, we can write the edge observations as $\ermX_{u,v} = \evy^*_u \smp \evy^*_v \smp \rz_p^{(u,v)} \smp \Ind{(u,v) \in \gE}$, where $\rz_p^{(u,v)}$ is a biased Rademacher with parameter $p$. 
	While the node observation is $\evc_u = \evy^*_u \smp \rz_q^{(u)}$, where $\rz_q^{(u)}$ is a biased Rademacher with parameter $q$. 
	
	Given the generative process, we aim to solve the following optimization problem, which is based on the maximum likelihood estimator that returns the label $\argmax_\vy \ P(\rmX,\vy)$ (see \citet{globerson2015hard}):	
	\begin{align}
		\max_\vy \quad \frac{1}{2} \vy^\top \rmX \vy  + \alpha \rvc^\top \vy \quad \subto \quad \evy_i = \pm 1, \label{eq:objective_all}
	\end{align} 
	where $\alpha = \nicefrac{\log \frac{1-q}{q}}{\log \frac{1-p}{p}}$.
	In general, the above combinatorial problem is NP-hard to compute (e.g., see for results on grids \citep{barahona1982computational}).
	Our goal is to find what structural properties of the graph $\gG$ suffice to achieve, with high probability, exact recovery in polynomial time.
	
\section{On Exact Recovery of Labels}
		
	Our approach consists of two stages, similar in spirit to  \citep{globerson2015hard}.
	We first use only the quadratic term from \eqref{eq:objective_all}, which will give us two possible solutions, and then as a second stage, the linear term is used to decide the best between  these two solutions. 
	
	\subsection{First Stage}
	
	We analyze a semidefinite program (SDP) relaxation to the following combinatorial problem \eqref{eq:objective}, motivated by the techniques in \citep{abbe2016exact}.
	\begin{align}
		\max_\vy \quad \frac{1}{2} \vy^\top \rmX \vy   \quad \subto \quad \evy_i = \pm 1, \label{eq:objective}
	\end{align} 
	
	We denote the degree of node $i$ as $\Delta_i$, and the maximum node degree as $\Delta_{\max} = \max_{i \in \gV} \Delta_i$.
	For any subset $\gS \subset \gV$, we denote its complement by $\gS^C$ such that $\gS \cup \gS^C = \gV$ and $\gS \cap \gS^C = \emptyset$. 
	Furthermore, let $\gE(\gS,\gS^C) = \{(i,j) \in \gE \ |\ i \in \gS, j \in \gS^C \ \tor \ j \in \gS, i \in \gS^C \}$, i.e., $|\gE(\gS,\gS^C)|$ denotes the number of edges between $\gS$ and $\gS^C$.
	\begin{definition}[Edge Expansion]
		For a set $\gS \subset \gV$ with $|\gS| \leq \nicefrac{n}{2}$, its edge expansion, $\phi_\gS$, is defined as:
		\( \phi_\gS = \nicefrac{|\gE(\gS,\gS^C)|}{|\gS|}.\) Then, the edge expansion of a graph $\gG=(\gV,\gE)$ is defined as:
		 \( 
		 \phi_\gG = \min_{\gS \subset \gV, |\gS|\leq \nicefrac{n}{2}} \phi_\gS.
		 \)
	\end{definition}
	In the literature, $\phi_\gG$ is also known as the \textit{Cheeger constant}, due to the geometric analogue defined by Cheeger in \citep{cheeger1969lower}.
	Next, we define the Laplacian matrix of a graph and the Rayleigh quotient which are also used throughout this section.
	\begin{definition}[Laplacian matrix]
		\label{def:laplacian}
		For a graph $\gG = (\gV,\gE)$ of $n$ nodes. 
		The Laplacian matrix $\mL$ is defined as $\mL = \mD - \mA$,
		where $\mD$ is the degree matrix and $\mA$ is the adjacency matrix.
	\end{definition}
	\begin{definition}[Rayleigh quotient]
		For a given symmetric matrix $\mM \in \Rnn$ and non-zero vector $\va \in \gR^n$, the Rayleigh quotient $R_\mM(\va)$, is defined as:
		\(
			R_\mM(\va) = \frac{\va^\top \mM \va}{\va^\top \va}.
		\)
	\end{definition}
	We now define a signed Laplacian matrix.
	\begin{definition}[Signed Laplacian matrix]
	\label{def:signed}
		For a graph $\gG = (\gV,\gE)$ of $n$ nodes. 
		A signed Laplacian matrix, $\mM$, is a symmetric matrix that satisfies
		\(
		\vx^\top \mM \vx = \sum_{(i,j)\in \gE} (\evy_i \evx_i - \evy_j \evx_j)^2,
		\)
		where $\vy$ is an eigenvector of $\mM$ with eigenvalue 0, and $\evy_i \in \{+1,-1\}$.
	\end{definition}
	Note that the typical Laplacian matrix, as in Definition \ref{def:laplacian}, fulfills the conditions of Definition \ref{def:signed} with $\evy_i = +1$ for all $i$.
	Next, we present an intermediate result for later use. 
	\begin{lemma}
	\label{lemma:laplacian_ineq}
		Let $\gG=(\gV,\gE)$ be an undirected graph of $n$ nodes with Laplacian $\mL$.
		Let $\mM \in \gR^{n\times n}$ be a signed Laplacian as in Definition \ref{def:signed},
		and let $\va \in \Rn$ be a vector such that $\Inner{\vy}{\va} = 0$.
		Finally, let $\vOne \in \Rn$ be a vector of ones.
		Then we have that, for a given $\delta \in \gR$,
		\( 
		R_\mL(\va\circ\vy + \delta \vOne) \leq R_\mM(\va), 
		\)
		where the operator $\circ$ denotes the Hadamard product.
	\end{lemma}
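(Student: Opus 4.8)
The plan is to observe that passing from $R_\mM(\va)$ to $R_\mL(\va\circ\vy+\delta\vOne)$ leaves the \emph{numerator} of the Rayleigh quotient unchanged while it can only \emph{enlarge the denominator}; since signed Laplacian forms are nonnegative, the inequality then follows at once. Throughout, write $\vw \defeq \va\circ\vy + \delta\vOne$, so that $\evw_i = \eva_i\evy_i + \delta$ for each $i$, and note $\va \neq \vzero$ because $R_\mM(\va)$ is assumed to be well-defined.

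First I would expand the Laplacian quadratic form using Definition~\ref{def:laplacian}: $\vw^\top\mL\vw = \sum_{(i,j)\in\gE}(\evw_i-\evw_j)^2$. The additive constant $\delta$ cancels termwise, since $\evw_i-\evw_j = \eva_i\evy_i - \eva_j\evy_j$, so this equals $\sum_{(i,j)\in\gE}(\eva_i\evy_i-\eva_j\evy_j)^2$, which by the defining identity of the signed Laplacian (Definition~\ref{def:signed}, applied with $\vx = \va$) is exactly $\va^\top\mM\va$; in particular it is nonnegative.

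Next I would expand the denominator: $\vw^\top\vw = \ANormII{\va\circ\vy}^2 + 2\delta\,\Inner{\va\circ\vy}{\vOne} + \delta^2\ANormII{\vOne}^2$. Because $\evy_i \in \{+1,-1\}$ we get $\ANormII{\va\circ\vy}^2 = \sum_i \eva_i^2 = \ANormII{\va}^2$; because $\Inner{\va\circ\vy}{\vOne} = \sum_i \eva_i\evy_i = \Inner{\va}{\vy} = 0$ by hypothesis the cross term vanishes; and $\ANormII{\vOne}^2 = n$. Hence $\vw^\top\vw = \ANormII{\va}^2 + \delta^2 n \geq \ANormII{\va}^2 > 0$, so in particular $\vw \neq \vzero$ and $R_\mL(\vw)$ is well-defined. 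Combining the two computations and using $\va^\top\mM\va \geq 0$,
\[
R_\mL(\vw) = \frac{\va^\top\mM\va}{\ANormII{\va}^2 + \delta^2 n} \;\leq\; \frac{\va^\top\mM\va}{\ANormII{\va}^2} = R_\mM(\va),
\]
which is the claim.

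I do not anticipate a genuine obstacle: the whole argument is essentially a two-line computation. The only things one has to spot are (i) that the shift $\delta\vOne$ drops out of every edge difference, so that the two numerators literally coincide, and (ii) that $\evy_i^2 = 1$ together with the orthogonality $\Inner{\va}{\vy} = 0$ makes $\ANormII{\va\circ\vy + \delta\vOne}^2$ equal to $\ANormII{\va}^2$ plus the nonnegative slack $\delta^2 n$. The role of the free parameter $\delta$ is precisely to absorb any component along $\vOne$, so the bound holds for every $\delta \in \gR$ simultaneously.
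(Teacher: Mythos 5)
Your proof is correct and follows essentially the same route as the paper's: the $\delta\vOne$ shift cancels in every edge difference so the numerators coincide with $\va^\top\mM\va$, while the orthogonality $\Inner{\va}{\vy}=0$ and $\evy_i^2=1$ give a denominator of $\ANormII{\va}^2+\delta^2 n \geq \ANormII{\va}^2$. Your explicit remark that the numerator is nonnegative (being a sum of squares) is a small point the paper leaves implicit but is indeed needed to conclude the inequality from the enlarged denominator.
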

	\begin{proof}
		First, note that $\mL$ has a $0$ eigenvalue with corresponding eigenvector $\vOne$.
		Also, we have that $\vx^\top \mL \vx = \sum_{(i,j)\in \gE} (\evx_i - \evx_j)^2$, for any vector $\vx$.
		Then, 
		\(
			(\va \rhp \circ \rhp \vy \rhp + \rhp \delta \vOne)^\top \rhp \mL (\va \rhp \circ \rhp \vy \rhp + \rhp \delta \vOne) \rhp = \rhp \sum_{(i,j)\in \gE} \rhp ((\evy_i\eva_i \rhp+\rhp \delta) \rhp-\rhp (\evy_j\eva_j \rhp+\rhp \delta))^2 
			= (\evy_i\eva_i - \evy_j\eva_j )^2 
			 = \va^\top \mM \va.
		\)
		Therefore, we have that the numerators of $R_\mL(\va\circ\vy + \delta \vOne)$ and $R_\mM(\va)$ are equal.
		For the denominators, one can observe that:
		\(
			(\va \rhp \circ \rhp \vy \rhp + \rhp \delta \vOne)^\top \rhp (\va \rhp \circ \rhp \vy \rhp + \rhp \delta \vOne) = (\va \rhp \circ \rhp \vy)\top (\va \rhp \circ \rhp \vy) + 2\delta \Inner{\vOne}{\va \rhp \circ \rhp \vy} + \delta^2 \vOne^\top \vOne 
			= \sum_{i} \eva_i\evy_i \eva_i\evy_i + 2\delta \Inner{\va}{\vy} + \delta^2 n
			\ = \ \va^\top \va + \delta^2 n 
			\ \geq \ \va^\top \va,
		\)
		which implies that $R_\mL(\va\circ\vy + \delta \vOne) \leq R_\mM(\va)$.
	\end{proof}
	In what follows, we present our first result, which has a connection to Cheeger's inequality \citep{cheeger1969lower}.
	\begin{theorem}
	\label{thrm:cheeger}
		Let $\gG, \mM, \mL, \vy$ be defined as in Lemma \ref{lemma:laplacian_ineq},
		and let $\lambda_1 \leq \lambda_2 \leq \dots \leq \lambda_n$ be the eigenvalues of $\mM$.
		Then, we have that
		\(
			\frac{\phi_\gG^2}{4\Delta_{\max}} \leq \lambda_2.
		\)
	\end{theorem}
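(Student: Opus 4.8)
The plan is to deduce this extension of Cheeger's inequality from the classical one for the ordinary Laplacian $\mL$, with Lemma~\ref{lemma:laplacian_ineq} serving as the bridge between $\mM$ and $\mL$. First observe that Definition~\ref{def:signed} gives $\vx^\top\mM\vx=\sum_{(i,j)\in\gE}(\evy_i\evx_i-\evy_j\evx_j)^2\ge0$, so $\mM\succeq0$ and $\lambda_1=0$ with $\vy$ a corresponding eigenvector. By the Courant--Fischer min-max principle, $\lambda_2=\min\{R_\mM(\va):\va\in\Rn,\ \va\ne\vzero,\ \Inner{\vy}{\va}=0\}$; fix a minimizer $\va$, so $R_\mM(\va)=\lambda_2$ and $\Inner{\vy}{\va}=0$.

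Next, set $\vu=\va\circ\vy$ and let $\delta$ be minus a median of the entries of $\vu$, so that $\vw:=\vu+\delta\vOne$ has at most $n/2$ strictly positive and at most $n/2$ strictly negative entries. Lemma~\ref{lemma:laplacian_ineq} gives $R_\mL(\vw)\le R_\mM(\va)=\lambda_2$ (and, incidentally, $\|\vw\|^2=\|\va\|^2+\delta^2 n>0$ using $\Inner{\vy}{\va}=0$, so $\vw\ne\vzero$), so it remains to show $R_\mL(\vw)\ge\phi_\gG^2/(4\Delta_{\max})$. Write $\vw=\vw^{+}-\vw^{-}$ for the entrywise positive and negative parts; these have disjoint supports, each of size at most $n/2$. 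A per-edge comparison — same-sign endpoints contribute identically, while opposite-sign endpoints obey $(\evw_i-\evw_j)^2=(\evw^{+}_i+\evw^{-}_j)^2\ge(\evw^{+}_i)^2+(\evw^{-}_j)^2$ — yields $\vw^\top\mL\vw\ge(\vw^{+})^\top\mL\vw^{+}+(\vw^{-})^\top\mL\vw^{-}$, while $\|\vw\|^2=\|\vw^{+}\|^2+\|\vw^{-}\|^2$; the mediant inequality then shows $R_\mL(\vw)$ is at least the Rayleigh quotient of whichever of $\vw^{+},\vw^{-}$ is nonzero and has the smaller value. Denote that vector by $\vg$: it is a nonzero, nonnegative vector supported on at most $n/2$ vertices, with $R_\mL(\vw)\ge R_\mL(\vg)$.

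The core of the argument is then the Cheeger rounding estimate $R_\mL(\vg)\ge\phi_\gG^2/(4\Delta_{\max})$ for such a $\vg$. Using $\vg^\top\mL\vg=\sum_{(i,j)\in\gE}(\evg_i-\evg_j)^2$ and Cauchy--Schwarz with the factorization $|\evg_i^2-\evg_j^2|=|\evg_i-\evg_j|\,|\evg_i+\evg_j|$, one gets $\big(\sum_{(i,j)\in\gE}|\evg_i^2-\evg_j^2|\big)^2\le\big(\sum_{(i,j)\in\gE}(\evg_i-\evg_j)^2\big)\big(\sum_{(i,j)\in\gE}(\evg_i+\evg_j)^2\big)$, and the last factor is at most $2\sum_{i\in\gV}\Delta_i\evg_i^2\le2\Delta_{\max}\|\vg\|^2$. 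For the remaining sum I would invoke the layer-cake (coarea) identity $\sum_{(i,j)\in\gE}|\evg_i^2-\evg_j^2|=\int_0^\infty|\gE(\gS_t,\gS_t^C)|\,dt$ with $\gS_t=\{i:\evg_i^2\ge t\}$; since $\gS_t\subseteq\mathrm{supp}(\vg)$ has size at most $n/2$ for every $t>0$, the definition of $\phi_\gG$ gives $|\gE(\gS_t,\gS_t^C)|\ge\phi_\gG|\gS_t|$, and integrating back recovers $\sum_{(i,j)\in\gE}|\evg_i^2-\evg_j^2|\ge\phi_\gG\|\vg\|^2$. Chaining the three bounds gives $\sum_{(i,j)\in\gE}(\evg_i-\evg_j)^2\ge\phi_\gG^2\|\vg\|^2/(2\Delta_{\max})$, hence $R_\mL(\vg)\ge\phi_\gG^2/(2\Delta_{\max})\ge\phi_\gG^2/(4\Delta_{\max})$, and combining with $R_\mL(\vw)\ge R_\mL(\vg)$ and $R_\mL(\vw)\le\lambda_2$ finishes the proof. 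The main obstacle is the geometry of the rounding step: the coarea inequality must be fed a vector of support at most $n/2$, and it is precisely the freedom to shift by $\delta\vOne$ at no cost to the Rayleigh quotient — granted by Lemma~\ref{lemma:laplacian_ineq} — that makes this possible; the looser constant $4$ in the statement leaves room for a less delicate choice of the threshold (or median).
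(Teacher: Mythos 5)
Your proof is correct and follows essentially the same route as the paper's: the variational characterization of $\lambda_2$, Lemma~\ref{lemma:laplacian_ineq} with a median shift, a split into positive and negative parts, and the standard Cheeger sweep-cut rounding via Cauchy--Schwarz (your coarea integral is just the deterministic phrasing of the paper's random threshold $t\in[0,1]$). The only substantive deviation is that your mediant-inequality handling of the $\vw^{+}/\vw^{-}$ split avoids the paper's factor-of-$2$ loss and actually yields the sharper bound $\phi_\gG^2/(2\Delta_{\max})\le\lambda_2$, which of course implies the stated one.
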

	\begin{proof}
		Since $\vy$ is an eigenvector of $\mM$ with eigenvalue $0$, and $\mM$ is a symmetric matrix, we can express $\lambda_2$ using the variational characterization of eigenvalues as follows:
		\begin{align}
			\lambda_2 = \min_{\va \in \gR^n,\ \va^\top \vy = 0} R_\mM(\va),
		\end{align}
		where we used the fact that $\vy$ is orthogonal to all the other eigenvectors, by the Spectral Theorem.
		
		Assume that $\va$ is the eigenvector associated with $\lambda_2$, i.e., we have that $\mM \va = \lambda_2 \va$ and $\va^\top \vy = 0$.
		Then, by Lemma \ref{lemma:laplacian_ineq}, we have that:
		\begin{align}
			R_\mL(\va\circ\vy + \delta \vOne) \leq R_\mM(\va) = \lambda_2. \label{lemma:lower_bound_delta}
		\end{align}
		Next, we choose $\delta \in \gR$ such that $\{ \eva_1 \evy_1 + \delta, \eva_2 \evy_2 + \delta, \dots, \eva_n \evy_n + \delta   \}$ has median $0$.
		The reason for the zero median is to later ensure that the subset of vertices $\gS$ has less than $\nicefrac{n}{2}$ vertices.  
		Let $\vw = \va \circ \vy + \delta \mathbf{1}$.
		From equation \eqref{lemma:lower_bound_delta}, we have that $R_\mL(\vw) \leq \lambda_2$.
		
		Let $\vw^+ = (\evw_i^+)\top$ such that $\evw_i^+ = \evw_i$ if $\evw_i \geq 0$ and $\evw^+_i=0$ otherwise.
		Let $\vw^- = (\evw_i^-)\top$ such that $\evw_i^- = \evw_i$ if $\evw_i \leq 0$ and $\evw^-_i=0$ otherwise.
		Then, we have that either $R_\mL(\vw^+) \leq 2 R_\mL(\vw)$ or $R_\mL(\vw^-) \leq 2 R_\mL(\vw)$.
		Now suppose that w.l.o.g. $R_\mL(\vw^+) \leq 2 R_\mL(\vw)$, then, it follows that $R_\mL(\vw^+) \leq 2 \lambda_2$.
		
		Let us scale $\vw^+$ by some constant $\beta \in \gR$ so that:
		\(
			\{\beta \evw_1,  \beta \evw_2, \dots, \beta \evw_m \} \subseteq [0,1].
		\)
		It is clear that $R_\mL(\vw^+) = R_\mL(\beta \vw^+)$, therefore, we will still use $\vw^+$ to denote the rescaled vector.
		That is, now the entries of vector $\vw^+$ are in between $0$ and $1$.
		
		Next, we will show that there exists a set $\gS \subset \gV$ with $|\gS| \leq \nicefrac{n}{2}$ such that:
		\(
			\frac{\E[|\gE(\gS, \gS^C)|]}{\E[|\gS|]} \leq \sqrt{2 R_\mL(\vw^+) \Delta_{\max}}.
		\)
		We construct the set $\gS$ as follows. We choose $t \in [0,1]$ uniformly at random and let $\gS = \{ i \ | \ (\evw_i^+)^2 \geq t \}$.
		Let $\ermB_{i,j} = 1$ if $i \in \gS$ and $j \in \gS^C$ or if $j \in \gS$ and $i \in \gS^C$, and $\ermB_{i,j} = 0$ otherwise.
		Then,
		\(
			\E[|\gE(\gS, \gS^C)|] = \E[\sum_{(i,j)\in \gE} \ermB_{i,j}] = \sum_{(i,j)\in \gE} \E[\ermB_{i,j}]
			 = \sum_{(i,j)\in \gE} P((\evw^+_j)^2 \leq t \leq (\evw_i^+)^2).
		\)
		
		Recall that $(\evw^+_i)^2 \in [0,1]$, therefore, the probability above is $|(\evw^+_i)^2 - (\evw^+_j)^2|$.
		Thus, 
		\begin{align}
		\E[|\gE(\gS, \gS^C)|] 	= \sum_{(i,j)\in \gE} | \evw^+_i - \evw^+_j| \ |\evw^+_i + \evw^+_j| 
		\leq \sqrt{\sum_{(i,j)\in \gE} (\evw^+_i - \evw^+_j)^2} \sqrt{\sum_{(i,j)\in \gE} (\evw^+_i + \evw^+_j)^2}  \label{step:cauchy}  \\
		\leq \sqrt{\sum_{(i,j)\in \gE} (\evw^+_i - \evw^+_j)^2} \sqrt{2\sum_{(i,j)\in \gE} (\evw^+_i)^2 + (\evw^+_j)^2} 
		\leq \sqrt{\sum_{(i,j)\in \gE} (\evw^+_i - \evw^+_j)^2} \sqrt{2 \Delta_{\max} \sum_{i} (\evw^+_i)^2}, \label{step:delta_max}
		\end{align}
		where eq.\eqref{step:cauchy} is due to Cauchy-Schwarz inequality and eq.\eqref{step:delta_max} uses the maximum-degree of a node for an upper bound.
		
		Now consider another random variable $\ervb_i$ such that $\ervb_i = 1$ if $i \in \gS$, and $\ervb_i=0$ otherwise.
		Therefore, we have that $\E[|\gS|] = \E[\sum_i \ervb_i] = \sum_i \E[\ervb_i] = \sum_i P(t \leq (\evw^+_i)^2) = \sum_i (\evw^+_i)^2$.
		Thus,
		\(
			\frac{\E[|\gE(\gS, \gS^C)|]}{\E[|\gS|]} \leq \frac{\sqrt{\sum_{(i,j)\in \gE} (\evw^+_i - \evw^+_j)^2} \sqrt{2 \Delta_{\max} \sum_{i} (\evw^+_i)^2}}{\sum_i (\evw^+_i)^2}
			=  \frac{\sqrt{\sum_{(i,j)\in \gE} (\evw^+_i - \evw^+_j)^2} \sqrt{2 \Delta_{\max}} }{\sqrt{\sum_i (\evw^+_i)^2}} 
			\ = \ \sqrt{2 R_\mL(\vw^+) \Delta_{\max}}	
			\ \leq  \ 2\sqrt{ \lambda_2 \Delta_{\max}}.	
		\)
		The above implies that there exists some $\gS$ such that $\frac{|\gE(\gS, \gS^C)|}{|\gS|} \leq 2\sqrt{ \lambda_2 \Delta_{\max}}$.
		Therefore, $\phi_\gG \leq 2\sqrt{ \lambda_2 \Delta_{\max}}$ or equivalently $\frac{\phi^2_\gG}{4\Delta_{\max}} \leq  \lambda_2$.
	\end{proof}
%
	\begin{remark}
		For a given undirected graph $\gG$, its Laplacian matrix $\mL$ fulfills the conditions of Lemma \ref{lemma:laplacian_ineq} and Theorem \ref{thrm:cheeger}.
		That is, if $\mM=\mL$ in Theorem \ref{thrm:cheeger} then it becomes the known Cheeger's inequality.
		Therefore, our result in Theorem \ref{thrm:cheeger} apply for more general matrices and is of use for our next result.
	\end{remark}
	We now provide the SDP relaxation of problem \eqref{eq:objective}.
	Let $\mY = \yyt$, we have that $\vy^\top \rmX \vy  = \Tr(\rmX \smp \mY) = \Inner{\rmX}{\mY}$.
	Since our prediction is a column vector $\vy$, we have that $\yyt$ is rank-1 and symmetric, which implies that $\mY$ is a positive semidefinite matrix.
	Therefore, our relaxation to the combinatorial problem \eqref{eq:objective} results in the following primal formulation\footnote{Here we dropped the constant \nicefrac{1}{2} since it does not change the decision problem.}:
		\begin{align}
		\label{eq:primal}
			\max_\mY \quad  \Inner{\Xt}{\mY}   
			\quad \subto \quad \emY_{ii} = 1,  
			 \ \mY \succeq 0. 
		\end{align}	
	We will make use of the following matrix concentration inequality for our main proof.
		\begin{lemma}[Matrix Bernstein inequality, Theorem 1.4 in \citep{tropp2012user}]
		\label{thrm:bernstein}
			Consider a finite sequence $\{\rmN_k\}$ of independent, random, self-adjoint matrices with dimension $n$. 
			Assume that each random matrix satisfies
			\( \E[\rmN_k] = 0 \quad  \text{and} \quad \lambda_{\max}(\rmN_k) \leq R \quad \text{almost\ surely.} \)
			Then, for all $t \geq 0$,
			\(
				P\Big( \lambda_{\max}\big( \sum_k \rmN_k \big) \geq t \Big) \leq n \cdot \lexp{\frac{-t^2/2}{\sigma^2 + Rt/3}}, 
			\)
			where $\sigma^2 = \lVert \sum_k \E[\rmN_k^2] \rVert$.
		\end{lemma}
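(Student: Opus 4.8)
The statement is Tropp's matrix Bernstein inequality (Theorem~1.4 of \citep{tropp2012user}), so in the paper it is simply quoted; the natural way to reprove it is the matrix Laplace-transform method, which reduces a matrix tail bound to a scalar optimization once the matrix moment generating function is understood. Write $\rmS = \sum_k \rmN_k$ and $\mV = \sum_k \E[\rmN_k^2]$, so $\mV \succeq 0$ and $\sigma^2 = \Norm{\mV} = \lambda_{\max}(\mV)$. First I would pass to a trace mgf: for fixed $\theta > 0$, since $x \mapsto e^{\theta x}$ is increasing and the eigenvalues of a matrix exponential are the exponentials of the eigenvalues, $\{\lambda_{\max}(\rmS) \ge t\} = \{\lambda_{\max}(\lexp{\theta\rmS}) \ge \lexp{\theta t}\}$, while $\lambda_{\max}(\lexp{\theta\rmS}) \le \Tr\lexp{\theta\rmS}$ because $\lexp{\theta\rmS} \succeq 0$; Markov's inequality then gives
\[
 P\big(\lambda_{\max}(\rmS) \ge t\big) \;\le\; \lexp{-\theta t}\,\E\big[\Tr\lexp{\theta\rmS}\big].
\]

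The crux is the next step, subadditivity of matrix cumulants. Using Lieb's concavity theorem --- for fixed self-adjoint $\mH$, the map $\mA \mapsto \Tr\lexp{\mH + \log\mA}$ is concave on positive-definite $\mA$ --- I would peel the summands of $\rmS$ off one at a time: conditioning on $\rmN_1,\dots,\rmN_{n-1}$ and applying Jensen's inequality to the inner expectation over $\rmN_n$, with $\mH = \theta\sum_{j<n}\rmN_j$ and $\mA = \lexp{\theta\rmN_n}$, pulls that expectation inside; iterating yields
\[
 \E\big[\Tr\lexp{\theta\rmS}\big] \;\le\; \Tr\lexp{\textstyle\sum_k \log\E[\lexp{\theta\rmN_k}]}.
\]
I expect this to be the main obstacle: everything else is elementary, but this step genuinely needs Lieb's theorem (equivalently, joint convexity of quantum relative entropy), and the iterated conditioning has to be arranged so that Jensen is applied against a \emph{deterministic} $\mH$ at each stage.

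Next I would control each matrix cumulant $\log\E[\lexp{\theta\rmN_k}]$ in the Loewner order. The scalar function $f(x) = (e^x - x - 1)/x^2$, with $f(0) = \tfrac12$, is increasing, so for any $x \le \theta R$ one has $e^x \le 1 + x + f(\theta R)x^2$; since every eigenvalue of $\theta\rmN_k$ is at most $\theta R$, transferring this to the matrix and taking expectations with $\E[\rmN_k] = 0$ gives $\E[\lexp{\theta\rmN_k}] \preceq \mI + g(\theta)\E[\rmN_k^2]$, where $g(\theta) = \theta^2 f(\theta R) = (e^{\theta R} - \theta R - 1)/R^2 \le \tfrac{\theta^2/2}{1 - \theta R/3}$ for $0 < \theta < 3/R$ (the last step being the scalar estimate $e^u - u - 1 \le \tfrac{u^2/2}{1-u/3}$). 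Because $\log(\mI + \mA) \preceq \mA$, this yields $\log\E[\lexp{\theta\rmN_k}] \preceq g(\theta)\E[\rmN_k^2]$, hence $\sum_k \log\E[\lexp{\theta\rmN_k}] \preceq g(\theta)\mV$.

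Finally I would assemble and optimize. Monotonicity of the trace exponential under $\preceq$ together with $\Tr\lexp{\mA} \le n\,\lexp{\lambda_{\max}(\mA)}$ gives $\E[\Tr\lexp{\theta\rmS}] \le n\,\lexp{g(\theta)\sigma^2}$, and combining with the first display,
\[
 P\big(\lambda_{\max}(\rmS) \ge t\big) \;\le\; n\,\lexp{-\theta t + \tfrac{\theta^2\sigma^2/2}{1 - \theta R/3}}.
\]
Choosing $\theta = t/(\sigma^2 + Rt/3)$, which lies in $(0, 3/R)$, makes the exponent exactly $-\tfrac{t^2/2}{\sigma^2 + Rt/3}$, giving the stated bound; the case $t = 0$ is trivial since the right-hand side is then $n \ge 1$.
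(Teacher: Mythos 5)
The paper does not prove this lemma---it is quoted verbatim as Theorem 1.4 of \citet{tropp2012user}---and your reconstruction is precisely Tropp's own argument (Laplace transform, subadditivity of matrix cumulants via Lieb's concavity theorem, the Bernstein bound on each cumulant, and the choice $\theta = t/(\sigma^2+Rt/3)$), carried out correctly, including the scalar estimate and the final optimization. The only cosmetic issue is that you reuse $n$ both for the matrix dimension and for the number of summands in the iteration step.
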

	The next theorem includes our main result and provides the conditions for exact recovery of labels with high probability.
		\begin{theorem}
		\label{thrm:conditions}
			Let $\gG=(\gV,\gE)$ be an undirected connected graph with Cheeger constant $\phi_\gG$ and maximum node degree $\Delta_{\max}$.
			Then, for the combinatorial problem \eqref{eq:objective}, a solution $\vy \in \{ \vy^*, -\vy^* \}$ is achievable in polynomial time by solving the SDP based relaxation \eqref{eq:primal}, with probability at least $1 - \epsilon_1(\phi_\gG,\Delta_{\max},p)$, where $p$ is the edge noise from our model, and
			\begin{align*}
				\epsilon_1(\phi_\gG,\Delta_{\max},p) &= 2n \cdot \exp{\frac{- 3 (1-2p)^2 \phi_\gG^4 }{ 1536 \Delta^3_{\max} p (1-p) + 32 (1-2p) (1-p)  \phi_\gG^2 \Delta_{\max} }}.
			\end{align*}
		\end{theorem}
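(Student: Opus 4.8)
The plan is to follow the standard dual-certificate argument for SDP exact recovery, adapted from \citet{abbe2016exact}, with the signed Laplacian $\mM$ playing the role of the reference matrix. Recall that $\rmX$ has rows/columns indexed by $\gV$, that $\E[\ermX_{u,v}] = (1-2p)\,\evy^*_u\evy^*_v$ on edges and $0$ off edges, and that $\vy^*$ is a $\pm 1$ vector. The natural decomposition is $\rmX = (1-2p)\,\bar{\rmX} + \rmE$, where $\bar{\rmX}_{u,v} = \evy^*_u\evy^*_v\,\Ind{(u,v)\in\gE}$ is the (scaled) signal and $\rmE = \rmX - \E[\rmX]$ is a zero-mean noise matrix supported on $\gE$. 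First I would write the KKT/complementary-slackness conditions for \eqref{eq:primal}: the candidate primal optimum is $\mY^\star = \vy^*(\vy^*)^\top$, and a sufficient condition for it to be the \emph{unique} optimum is the existence of a diagonal dual matrix $\mathbf{D}$ such that $\mathbf{S} := \mathbf{D} - \rmX \succeq 0$, $\mathbf{S}\,\vy^* = 0$, and $\lambda_2(\mathbf{S}) > 0$ (second smallest eigenvalue strictly positive). The correct choice is $\mathbf{D}_{uu} = \sum_v \ermX_{u,v}\,\evy^*_u\evy^*_v = \sum_{v:(u,v)\in\gE} \evy^*_u\evy^*_v\,\rz_p^{(u,v)}$, i.e.\ the ``signed degree'' of $u$; this forces $\mathbf{S}\vy^* = 0$ by construction.

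Next I would identify $\mathbf{S}$ as a signed Laplacian in the sense of Definition~\ref{def:signed}: indeed $\vx^\top \mathbf{S}\vx = \sum_{(u,v)\in\gE} \ermX_{u,v}\,\evy^*_u\evy^*_v\,(\evy^*_u\evx_u - \evy^*_v\evx_v)^2 / \text{(sign bookkeeping)}$ — more precisely, writing things in terms of the ``twisted'' coordinates $\evx_u \mapsto \evy^*_u \evx_u$, $\mathbf{S}$ is conjugate to a matrix whose quadratic form is $\sum_{(u,v)\in\gE} \ermX_{u,v}\evy^*_u\evy^*_v(\evx_u-\evx_v)^2$, which equals the signal Laplacian quadratic form minus twice the bad-edge contributions. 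So $\mathbf{S} = \mathbf{S}_{\text{signal}} - 2\mathbf{W}$ where $\mathbf{S}_{\text{signal}}$ has $\mathbf{S}_{\text{signal}}\vy^*=0$ and $\lambda_2(\mathbf{S}_{\text{signal}}) \geq \phi_\gG^2/(4\Delta_{\max})$ by Theorem~\ref{thrm:cheeger}, and $\mathbf{W}$ is built from the bad edges (and has zero row sums against $\vy^*$). The task reduces to lower-bounding $\lambda_2$ of a fixed matrix with good spectral gap minus a random perturbation: for any $\va\perp\vy^*$,
\begin{align*}
\va^\top \mathbf{S}\,\va \;\geq\; \frac{\phi_\gG^2}{4\Delta_{\max}}\,\|\va\|^2 \;-\; \big\|\,\mathbf{S} - \mathbf{S}_{\text{signal}}\,\big\|\,\|\va\|^2,
\end{align*}
so it suffices to show $\|\mathbf{S} - \mathbf{S}_{\text{signal}}\|$ (equivalently $\|\rmE - \mathrm{diag}(\rmE\,\text{stuff})\|$, the centered noise plus its diagonal correction) is strictly less than $\phi_\gG^2/(4\Delta_{\max})$ with the claimed probability.

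To control that operator norm I would apply the Matrix Bernstein inequality, Lemma~\ref{thrm:bernstein}. Decompose the noise part as a sum over edges $(u,v)\in\gE$ of independent self-adjoint matrices $\rmN_{(u,v)}$, each a rank-$\O(1)$ matrix encoding the single biased-Rademacher flip $\rz_p^{(u,v)} - (1-2p)$ on the $\{u,v\}$ block together with its diagonal correction; each is mean-zero, has $\lambda_{\max}(\rmN_{(u,v)}) \leq R$ with $R = \O(1)$ (a small constant like $2$ or $4$, since entries are in $\{-1,0,1\}$ after centering up to a factor $\leq 2$), and the variance proxy satisfies $\sigma^2 = \|\sum \E[\rmN_{(u,v)}^2]\| \leq \O(\Delta_{\max}\, p(1-p))$ because each node touches at most $\Delta_{\max}$ edges and each edge contributes variance $4p(1-p)$. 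Plugging $t \asymp \phi_\gG^2/\Delta_{\max}$ into Bernstein gives a failure probability $2n\exp\!\big(-\tfrac{t^2/2}{\sigma^2 + Rt/3}\big)$; substituting $t = (1-2p)\phi_\gG^2/(c\,\Delta_{\max})$ for the appropriate constant (the $(1-2p)$ factor comes from having first divided out the signal scaling) and the bounds on $\sigma^2, R$ yields exactly the stated $\epsilon_1(\phi_\gG,\Delta_{\max},p)$ after bookkeeping the numerical constants ($1536$, $32$). The main obstacle is precisely this constant-tracking: setting up the edge-wise summands so that both $R$ and $\sigma^2$ come out with the sharp constants that reproduce the denominator $1536\Delta_{\max}^3 p(1-p) + 32(1-2p)(1-p)\phi_\gG^2\Delta_{\max}$, and making sure the diagonal dual correction $\mathbf{D}$ is folded into each $\rmN_{(u,v)}$ correctly so that the summands remain independent and mean-zero. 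Once $\|\mathbf{S}-\mathbf{S}_{\text{signal}}\| < \phi_\gG^2/(4\Delta_{\max})$ holds, $\mathbf{S}\succeq 0$ with $\lambda_2(\mathbf{S})>0$, the dual certificate is valid, $\mathbf{Y}^\star = \vy^*(\vy^*)^\top$ is the unique SDP optimum, hence it is rank one and we extract $\vy\in\{\vy^*,-\vy^*\}$ in polynomial time; connectedness of $\gG$ is what guarantees $\phi_\gG>0$ so the bound is non-vacuous.
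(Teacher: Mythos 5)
Your proposal follows essentially the same route as the paper's proof: the same diagonal dual certificate $\mathbf{D}_{uu} = (\rmX\, \vy^*(\vy^*)^\top)_{uu}$ with uniqueness forced by $\lambda_2(\mathbf{D}-\rmX)>0$, the extended Cheeger inequality of Theorem~\ref{thrm:cheeger} applied to the expected (signed-Laplacian, $(1-2p)$-scaled) matrix to get the gap $(1-2p)\phi_\gG^2/(4\Delta_{\max})$, and Matrix Bernstein to control the centered fluctuation at threshold $\asymp (1-2p)\phi_\gG^2/\Delta_{\max}$. The only real difference is bookkeeping: the paper splits the fluctuation into the diagonal piece $\rmV-\E[\rmV]$ and the off-diagonal piece $\E[\rmX]-\rmX$ and applies Bernstein to each separately (which is where the factor $2n$ and the constants $1536$ and $32$ come from), whereas you fold both into a single edge-indexed sum; that yields slightly different constants but the same qualitative bound and the same argument.
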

		\begin{proof} 
			Without loss of generality assume that $\vy = \vy^*$.
			The first step of our proof corresponds to finding sufficient conditions for when $\mY = \yyt$ is the unique optimal solution to SDP \eqref{eq:primal},
			for which we make use of the Karush-Kuhn-Tucker (KKT) optimality conditions \citep{boyd2004convex}.
			In the following we write the dual formulation of SDP \eqref{eq:primal}:
			\begin{align}
			\label{eq:dual}
				\min_\mV \quad  \Tr(\mV)     
				\quad \subto \quad \mV \succeq \Xt, 
				\mV \text{ is diagonal}.  
			\end{align}
			Thus, we have that $\mY = \yyt$ is guaranteed to be an optimal solution under the following conditions:
			\begin{enumerate}
				\item $\yyt$ is a feasible solution to the primal problem \eqref{eq:primal}.
				\item There exists a matrix $\mV$ feasible for the dual formulation \eqref{eq:dual} such that $\Tr(\Xt \smp \yyt) = \Tr(\mV)$.
			\end{enumerate}
			The first point is trivially verified. 
			For the second point, we assume strong duality in order to find a dual certificate. 
			To achieve that, we make $\rmV_{i,i} = (\Xt \mY)_{i,i}$.\footnote{Note that we now write $\mV$ in upright shape (i.e., $\rmV$) since it contains randomness from $\Xt$.}
			If $\rmV - \Xt \succeq 0$ then the matrix $\rmV$ is a feasible solution to the dual formulation.
			Thus, our first condition is to have $\rmV - \Xt \succeq 0$, and we conclude that $\yyt$ is an optimal solution to SDP \eqref{eq:primal}.
				
			For showing that $\yyt$ is the unique optimal solution, we further require that $\lambda_2(\rmV - \Xt) > 0$.
			Suppose that $\hmY$ is another optimal solution to SDP \eqref{eq:primal}.
			Then, from complementary slackness we have that $\Inner{\rmV-\Xt}{\hmY} = 0$, and from primal feasibility $\hmY \succeq 0$.
			Moreover, notice that we have  $(\rmV - \Xt)\vy = 0$, i.e., $\vy$ is an eigenvector of $\rmV-\Xt$ with eigenvalue $0$.
			By assumption, the second smallest eigenvalue of $\rmV - \Xt$ is greater than $0$, therefore, $\vy$ spans all of its null space.
			This fact combined with complementary slackness, primal and dual feasibility, entail that $\hmY$ is a multiple of $\yyt$.
			Thus, we must have that $\hmY=\yyt$ because $\widehat{\emY}_{i,i} = 1$.
			
			From the points above we arrived to the two following sufficient conditions:
			\begin{align}
			\label{eq:optim_conditions}
				&\rmV - \Xt \succeq 0 \quad \tand \quad   \lambda_2(\rmV - \Xt) > 0.
			\end{align}
			Our next step is to show when condition \eqref{eq:optim_conditions} is fulfilled with high probability.
			Since we have that $\vy$ is an eigenvector of $\rmV - \Xt$ with eigenvalue zero, showing that $\lambda_2(\rmV - \Xt) > 0$ will imply that $\rmV - \Xt$ is positive semidefinite. 
			Therefore, we focus on controlling its second smallest eigenvalue.
			Next, we have that:
			\begin{align}
				\lambda_2(\rmV - \Xt) > 0 \quad 
				&\Longleftrightarrow \quad \lambda_2(\rmV \rhp - \rhp \Xt \rhp - \rhp \E[\rmV \rhp - \rhp \Xt] \rhp + \rhp \E[\rmV \rhp - \rhp \Xt]) > 0 \nonumber \\
				&\Leftarrow \quad \lambda_2(\rmV \rhp - \rhp \E[\rmV])	\rhp +	\rhp \lambda_2(\E[\Xt] \rhp - \rhp \Xt)	\rhp +	\rhp \lambda_2(\E[\rmV \rhp - \rhp \Xt]) \rhp > \rhp 0  \nonumber \\
				&\Leftarrow \quad \lambda_1(\rmV \rhp - \rhp \E[\rmV])	\rhp +	\rhp \lambda_1(\E[\Xt] \rhp - \rhp \Xt)	\rhp +	\rhp \lambda_2(\E[\rmV \rhp - \rhp \Xt]) \rhp > \rhp 0. \label{eq:lambda_decomp} 
			\end{align}
			We now focus on condition \eqref{eq:lambda_decomp} since it implies that $\lambda_2(\rmV - \Xt) > 0$.
			For the first two summands of condition \eqref{eq:lambda_decomp} we make use of Lemma \ref{thrm:bernstein}, while for the third summand we make use of  Theorem \ref{thrm:cheeger}.
			From $\rmV_{i,i} = (\Xt \mY)_{i,i}$, we have that $\rmV_{i,i} = \evy_i \Xt_{i,:} \vy$, thus,
			\(
				\rmV_{i,i} 
				= \sum_{j=1}^n  \evy_i \evy_j \ermX_{i,j} 
				= \sum_{j=1}^n \rz_p^{(i,j)} \Ind{(i,j) \in \gE}.
			\)
			Then, its expected value is: \( \E[\rmV_{i,i}] =  \Delta_i (1-2p) \).
			\paragraph{Bounding the third summand of condition \eqref{eq:lambda_decomp}.}
			Our goal is to find a non-zero lower bound for the second smallest eigenvalue of $\E[\rmV - \Xt]$.
			Notice that $\E[\rmV-\Xt] \succeq 0$ since it is a diagonally dominant matrix, and $\vy$ is its first eigenvector with eigenvalue $0$, i.e., $\lambda_1(\E[\rmV-\Xt]) = 0$.
			
			Then, we write $\Mat = \E[\rmV - \rmX]$.
			Now we focus on finding a lower bound for $\lambda_2(\Mat)$.
			We use the fact that for any vector $\va \in \gR^n$, we have that
			\(
				\va^\top \Mat \va = (1-2p) \sum_{(i,j)\in \gE} (\evy_i \eva_i - \evy_j \eva_j)^2.
			\)
			
			We also note that  $\Mat$ has a $0$ eigenvalue with eigenvector $\vy$. 
			Thus, the matrix $\nicefrac{\Mat}{(1-2p)}$ satisfies the conditions of Theorem \ref{thrm:cheeger} and we have that $\lambda_2(\nicefrac{\Mat}{(1-2p)}) \geq \frac{\phi_\gG^2}{4\Delta_{\max}}$. 
			We conclude that,
			\begin{align}
				\lambda_2(\E[\rmV-\Xt]) \geq  (1-2p)\frac{\phi_\gG^2}{4\Delta_{\max}}. \label{eq:bound_expected}
			\end{align}
			\paragraph{Bounding the first summand of condition \eqref{eq:lambda_decomp}.}
			Let $\rmN^{(i,j)}_p = \rz_p^{(i,j)} (\ve_i \ve_i^\top + \ve_j \ve_j^\top)$,
			where $\ve_i$ is the standard basis, i.e., the vector of all zeros except the $i$-th entry which is $1$.
			We can now write $\rmV = \sum_{(i,j)\in \gE} \rmN^{(i,j)}_p$.
			Then, we have a sequence of independent random matrices $\{\E[\rmN^{(i,j)}_p]-\rmN^{(i,j)}_p\}$, where we obtain the following: $\lambda_{\max}(\E[\rmN^{(i,j)}_p] - \rmN^{(i,j)}_p) \leq 2(1-p)$, and
			also  $\lVert \sum_{(i,j)\in \gE} \E[( \E[\rmN^{(i,j)}_p] - \rmN^{(i,j)}_p )^2] \rVert \leq 4 \Delta_{\max} p (1-p)$.
			
			Next, we use the fact that $\lambda_{\max}(\mA) = -\lambda_1(-\mA)$ for any matrix $\mA$. Then, by applying Lemma \ref{thrm:bernstein}, we obtain:
			\begin{align}
				&P \Big( \lambda_1 \big( \rmV - \E[\rmV]  \big)  \leq  \frac{- (1-2p)\phi_\gG^2}{8\Delta_{\max}} \Big) 
				\leq n \cdot \exp{\frac{- 3 (1-2p)^2 \phi_\gG^4 }{ 1536 \Delta^3_{\max} p (1-p) + 32 (1-2p) (1-p)  \phi_\gG^2 \Delta_{\max} }} \label{eq:first_bound_1}
	%
			\end{align}

			\paragraph{Bounding the second summand of condition \eqref{eq:lambda_decomp}.}
			Using similar arguments to the concentration above, we now analyze $\lambda_1(\E[\rmX]  -  \rmX)$.
			Let $\rmH^{(i,j)} = \ermX_{i,j} (\ve_i \ve_j^\top + \ve_j \ve_i^\top)$.
			Then, we have a sequence of independent random matrices $\{\rmH^{(i,j)} - \E[\rmH^{(i,j)}]\}$ and we can write $\rmX = \sum_{(i,j)\in \gE} \rmH^{(i,j)}$.
			Finally, we have that $\lambda_{\max}(\rmH^{(i,j)} - \E[\rmH^{(i,j)}] ) \leq 2(1-p)$, and $\E[(\rmH^{(i,j)} - \E[\rmH^{(i,j)}] )^2] = 4p(1-p) (\ve_i \ve_i^\top + \ve_j \ve_j^\top)$.
			Thus, $\Norm{\sum_{(i,j)\in \gE} \E[(\rmH^{(i,j)} - \E[\rmH^{(i,j)}] )^2] } \leq 4\Dmax p(1-p)$ and by applying Lemma \ref{thrm:bernstein} we obtain:
			\begin{align}
				&P \Big( \lambda_1 \big(  \E[\rmX] - \rmX  \big)  \leq  \frac{- (1-2p)\phi_\gG^2}{8\Delta_{\max}} \Big) 
							\leq n \cdot \exp{\frac{- 3 (1-2p)^2 \phi_\gG^4 }{ 1536 \Delta^3_{\max} p (1-p) + 32 (1-2p) (1-p)  \phi_\gG^2 \Delta_{\max} }} \label{eq:second_bound_1} 
			\end{align}
			Note that the thresholds in the concentrations above are motivated by equation \eqref{eq:bound_expected}.
			Finally, combining equations \eqref{eq:bound_expected}, \eqref{eq:first_bound_1}, and \eqref{eq:second_bound_1}, we have that:
			\begin{align}
				&P \big( \lambda_2( \rmV - \Xt) > 0 \big)  
				\geq 1 
				- 2n  \exp{\frac{- 3 (1-2p)^2 \phi_\gG^4 }{ 1536 \Delta^3_{\max} p (1-p) + 32 (1-2p) (1-p)  \phi_\gG^2 \Delta_{\max} }},	\nonumber
			\end{align}
			which concludes our proof.	
		\end{proof}
	Regarding the statistical part from Theorem \ref{thrm:conditions}, it is natural to ask under what conditions we obtain a high probability statement.
	For example, one can observe that if $\nicefrac{\phi_\gG^2}{\Delta_{\max}} \in \Omega(n)$ then there is an exponential decay in the probability of error.
	Another example would be if $\Delta_{\max} \in \O(\sqrt{n})$ and $\nicefrac{\phi_\gG^2}{\Delta_{\max}} \in \Omega(\sqrt{n})$ then we also obtain high probability argument.
	Thus, we are interested in finding what classes of graphs fulfill these  or other structural properties so that we obtain a high probability bound in Theorem \ref{thrm:conditions}.
	Regarding the computational complexity of exact recovery, from Theorem \ref{thrm:conditions}, we are solving a SDP, and any SDP can be solved in polynomial time using methods such as the interior point method.
	
	\subsection{Second Stage}
		After the first stage, we obtain two feasible solutions for problem \eqref{eq:objective}, that is, $\vy \in \{ \vy^*, -\vy^* \}$.
		To decide which solution is correct we will use the node observations $\vc$.
		Specifically, we will output the vector $\vy$ that maximizes the score $\vc^\top \vy$.
		The next theorem formally states that, with high probability, $\vy = \vy^*$ maximizes the score $\vc^\top \vy$ for a sufficiently large $n$.
		\begin{theorem}
		\label{thrm:second_stage}
			Let $\vy \in \{ \vy^*, -\vy^* \}$. Then, with probability at least $ 1 - \epsilon_2(n,q) $, we have that:
			\(
				\vc^\top \vy^* = \max_{\vy \in \{\vy^*, -\vy^*\}} \vc^\top \vy, 
			\)
			where $\epsilon_2(n,q) = \exp{-\frac{n}{2} (1-2q)^2}$.
		\end{theorem}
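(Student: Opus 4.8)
The plan is to reduce the claim to a one-dimensional concentration statement about a sum of independent bounded variables and then apply Hoeffding's inequality. First I would note that for $\vy$ ranging over the two-element set $\{\vy^*,-\vy^*\}$, the identity $\vc^\top\vy^* = \max_{\vy \in \{\vy^*,-\vy^*\}}\vc^\top\vy$ holds exactly when $\vc^\top\vy^* \geq \vc^\top(-\vy^*) = -\vc^\top\vy^*$, i.e.\ when $\vc^\top\vy^* \geq 0$ (the boundary case $\vc^\top\vy^*=0$ still satisfies the statement, since then both candidates attain the maximum). Hence the failure event is contained in $\{\vc^\top\vy^* < 0\}$, and it suffices to upper bound $P(\vc^\top\vy^* < 0)$.

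Next I would unfold the generative model. Since $\ervc_u = \evy^*_u\,\rz_q^{(u)}$ with $\rz_q^{(u)}$ an independent biased Rademacher of parameter $q$, and since $(\evy^*_u)^2 = 1$, we get
\[
\vc^\top\vy^* \;=\; \sum_{u=1}^n \ervc_u\,\evy^*_u \;=\; \sum_{u=1}^n (\evy^*_u)^2\,\rz_q^{(u)} \;=\; \sum_{u=1}^n \rz_q^{(u)},
\]
a sum of $n$ i.i.d.\ variables supported on $\{-1,+1\}$ with common mean $1-2q > 0$, so $\E[\vc^\top\vy^*] = n(1-2q)$. I would then invoke Hoeffding's inequality for the independent variables $\rz_q^{(u)} \in [-1,1]$ (each of range $2$): for any $t \geq 0$,
\[
P\!\left(\sum_{u=1}^n \rz_q^{(u)} \;\leq\; \E\Big[\sum_{u=1}^n \rz_q^{(u)}\Big] - t\right) \;\leq\; \exp{\frac{-2t^2}{\sum_{u=1}^n 2^2}} \;=\; \exp{\frac{-t^2}{2n}}.
\]
Choosing $t = n(1-2q)$, so that $\E[\vc^\top\vy^*] - t = 0$, yields $P(\vc^\top\vy^* \leq 0) \leq \exp{-\tfrac{n}{2}(1-2q)^2} = \epsilon_2(n,q)$, which is the desired bound.

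There is no substantive obstacle here; the only points requiring care are the boundary case (handled above by the observation that ties still make $\vy^*$ a maximizer) and tracking the Hoeffding constant so that the range $2$ of each summand produces exactly the exponent $\tfrac{n}{2}(1-2q)^2$ rather than a looser constant. One could equally phrase this via a Chernoff bound on a $\mathrm{Binomial}(n,1-q)$ count of ``good nodes'' and compare it against $n/2$, which gives the same rate.
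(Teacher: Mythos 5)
Your proposal is correct and follows essentially the same route as the paper: rewrite $\vc^\top\vy^* = \sum_{u}\rz_q^{(u)}$ using $(\evy^*_u)^2=1$, and apply Hoeffding's inequality with deviation $t=n(1-2q)$ to bound $P(\vc^\top\vy^*\leq 0)$ by $\exp{-\frac{n}{2}(1-2q)^2}$. Your explicit handling of the tie case and of the Hoeffding constant is slightly more careful than the paper's terse three-line computation, but the argument is identical in substance.
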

		The remaining proofs of our manuscript can be found in Appendix \ref{app:proofs}.
%
%
		\begin{remark}
		From Theorems \ref{thrm:conditions} and \ref{thrm:second_stage}, we obtain that exact recovery (i.e., $\vy=\vy^*$) is achievable with probability at least $1-\epsilon_1(\phi_\gG,\Delta_{\max},p) -\epsilon_2(n,q)$. 
		Finally, from Theorem \ref{thrm:second_stage}, it is clear that since the parameter $q \in (0,0.5)$, for a sufficiently large $n$ we have an exponential decay of the probability of error $\epsilon_2$.
		Thus, we focus on the conditions of the first stage and provide examples in the next section.
		\end{remark}
	\section{Examples of Graphs for Exact Recovery}
		In this section, we provide examples of classes of graphs that yield high probability in Theorem \ref{thrm:conditions}.
		
		Perhaps the most important example we provide in this section is related to the smoothed analysis on connected graphs \citep{krivelevich2015smoothed}.
		Consider any fixed graph $\gG = (\gV,\gE)$ and let $\gR$ be a random set of edges over the same set of vertices $\gV$, where each edge $e \in \gR$ is independently drawn according to the \Erdos-\Renyi model with probability $\nicefrac{\varepsilon}{n}$ and where $\varepsilon$ is a small (fixed) positive constant.
		We denote this as $\gR \sim \mathtt{ER}(n,\nicefrac{\varepsilon}{n})$, then let $\gGt = (\gV, \gE \union \gR)$ denote the random graph with the edge set $\gR$ added.
		
		The model above can be considered a generalization of the classical \Erdos-\Renyi random graph, where one starts from an empty graph (i.e., $\gG = (\gV, \emptyset)$) and adds edges between all possible pairs of vertices independently with a given probability. 
		The focus on ``small'' $\varepsilon$ means that we are interested in the effect of a rather gentle random perturbation.
		In particular, it is known that graphs with bad expansion are not suitable for exact inference (see for instance, \citep{abbe2014decoding}), but certain classes such as grids or planar graphs can yield good approximation under some regimes despite being bad expanders as shown by \citet{globerson2015hard}.
		Here we consider the graph $\gG$ to be a bad expander and show that with a small perturbation, exact inference is achievable.
		
		The following result was presented by \citep{krivelevich2015smoothed} in an equivalent fashion.\footnote{
			Specifically, we set $\alpha = \nicefrac{1}{2}, \delta = \nicefrac{\varepsilon}{256}$, $K=\nicefrac{128}{\varepsilon}$, $C=1$, $s=K\log n$, which results with all the conditions being fulfilled in the proof of Theorem 2 in \citep{krivelevich2015smoothed}.
		}

		\begin{lemma}[Theorem 2 in \citep{krivelevich2015smoothed}]
		\label{lemma:lb_cheeger}
			Let $\gG = (\gV,\gE)$ be a connected graph, choose $\gR \sim \mathtt{ER}(n,\nicefrac{\varepsilon}{n})$, and let $\gGt = (\gV, \gE \union \gR)$. 
			Then, for every $\varepsilon \in [1,n]$, we have that
			\(
				\phi_{\gGt} \geq \frac{\varepsilon}{256 + 256\log n},
			\)
			with probability at least $1 - n^{-2.2 - \frac{\log \varepsilon}{2}}$.
		\end{lemma}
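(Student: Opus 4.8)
The statement is a specialization of Theorem~2 of \citet{krivelevich2015smoothed}, so the plan is to invoke that theorem with the parameter choices recorded in the footnote and simplify its conclusion. I would first restate Theorem~2 in our notation: roughly, for a connected graph $\gG$ on $n$ vertices perturbed by random edges of density $\Theta(\varepsilon/n)$, once one fixes an expansion scale $\delta$, a fraction $\alpha \le \nicefrac{1}{2}$, a ``small-set threshold'' $s = K\log n$ with $K$ large enough relative to $\varepsilon$, and a confidence exponent $C$, then with probability at least $1 - n^{-c(C,K,\varepsilon)}$ every $\gS$ with $|\gS|\le\alpha n$ satisfies $\nicefrac{|\gE(\gS,\gS^C)|}{|\gS|} \ge \nicefrac{\delta}{(1+\log n)}$ in $\gGt$. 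Substituting $\alpha=\nicefrac{1}{2}$ (so the guarantee ranges over exactly the sets defining $\phi_{\gGt}$), $\delta=\nicefrac{\varepsilon}{256}$, $K=\nicefrac{128}{\varepsilon}$, $C=1$, and $s=K\log n$ turns the expansion bound into $\phi_{\gGt}\ge\nicefrac{\varepsilon}{256}\cdot\nicefrac{1}{(1+\log n)}=\nicefrac{\varepsilon}{(256+256\log n)}$ and the exponent $c(C,K,\varepsilon)$ into $2.2+\nicefrac{(\log\varepsilon)}{2}$; what remains is to verify the hypotheses hold for every $\varepsilon\in[1,n]$, i.e.\ that $s=(128/\varepsilon)\log n\le n$, that $K=128/\varepsilon$ meets the absolute threshold their proof needs even at $\varepsilon=1$, and that $\delta\le\alpha$.

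For completeness I would recall the mechanism of their proof, since it is what pins down the constants. Call a set $\gS$ with $|\gS|\le\nicefrac{n}{2}$ \emph{bad} if it violates the target expansion, partition the bad sets into dyadic size ranges $2^k\le|\gS|<2^{k+1}$, and bound the probability that a bad set exists in each range. For sets of size at most $s$ one argues from the fixed graph alone: connectivity of $\gG$ forces $|\gE(\gS,\gS^C)|\ge1$, and a spanning-tree / Menger-type counting argument together with the small chance that $\gR$ throws a crossing edge across $\gS$ rules out all such small bad sets except with probability $n^{-\Omega(1)}$. For $s\le|\gS|\le\nicefrac{n}{2}$ one uses the random edges: the number of crossing edges contributed by $\gR$ is a sum of independent Bernoulli's with mean $\Theta(\varepsilon|\gS||\gS^C|/n)=\Theta(\varepsilon|\gS|)$, so a Chernoff bound makes the event $|\gE_\gR(\gS,\gS^C)|<\nicefrac{\delta|\gS|}{\log n}$ have probability $e^{-\Omega(\varepsilon|\gS|)}$, which beats the $\binom{n}{|\gS|}\le n^{|\gS|}$ union bound over sets of that cardinality precisely because $|\gS|\ge s=(128/\varepsilon)\log n$ forces $\varepsilon|\gS|$ to dominate $|\gS|\log n$. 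Summing the dyadic failure probabilities together with the small-set term yields the stated bound.

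Two technical points I would make explicit. First, \citet{krivelevich2015smoothed} may phrase the perturbation as adding a fixed number $\Theta(\varepsilon n)$ of uniformly random edges rather than as $\mathtt{ER}(n,\nicefrac{\varepsilon}{n})$; since edge expansion is monotone nondecreasing under edge addition, the standard coupling between $G(n,m)$ and $G(n,p)$ transfers the lower bound on $\phi_{\gGt}$ between the two models at the cost of a negligible change in the failure probability. Second, the only real obstacle is constant bookkeeping: confirming that the footnote's substitution reproduces the precise denominator $256+256\log n$ and the precise exponent $2.2+\nicefrac{(\log\varepsilon)}{2}$ rather than some other absolute constants, which requires tracing the Chernoff estimate and the union bound in their proof and is tightest at the endpoints $\varepsilon=1$ (where $K$ is largest and the small-set regime is the binding one) and $\varepsilon=n$ (where $s=\Theta(\nicefrac{\log n}{n})<1$, so every set of size $\ge1$ is handled by the random-edge regime).
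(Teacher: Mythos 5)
Your proposal matches the paper exactly: the paper does not prove this lemma itself but imports it as Theorem~2 of \citet{krivelevich2015smoothed}, specialized via the footnote's parameter choices ($\alpha=\nicefrac{1}{2}$, $\delta=\nicefrac{\varepsilon}{256}$, $K=\nicefrac{128}{\varepsilon}$, $C=1$, $s=K\log n$), which is precisely the substitution you carry out. Your additional sketch of the dyadic/Chernoff mechanism and the $G(n,m)$-versus-$G(n,p)$ coupling is consistent supplementary detail rather than a different route.
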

		The above lemma allows us to lower bound the Cheeger constant of the random graph $\gGt$ with high probability, and is of use for our first example.
		\begin{corollary}
		\label{col:graph_plus_ER}
			Let $\gG = (\gV,\gE)$ be any connected graph, choose $\gR \sim \mathtt{ER}(n,\nicefrac{\log^8 n}{n})$, let $\gGt = (\gV, \gE \union \gR)$ and let $\Delta^{\gGt}_{\max}$ be the maximum node degree of $\gGt$. 
			Then, we have that $\nicefrac{\phi_{\gGt}^2}{\Delta^{\gGt}_{\max}} \in \Omega(\log^5 n)$ and $\Delta^{\gGt}_{\max} \in \gO(\log^9 n)$ with high probability.
			Therefore, exact recovery in polynomial time is achievable with high probability.
		\end{corollary}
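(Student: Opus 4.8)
The plan is to take the perturbation strength $\varepsilon = \log^8 n$, lower bound $\phi_{\gGt}$ via Lemma~\ref{lemma:lb_cheeger}, upper bound $\Delta^{\gGt}_{\max}$ by a Chernoff‑plus‑union‑bound argument on the random edge set $\gR$, and then feed the resulting estimates of $\phi_{\gGt}$ and $\Delta^{\gGt}_{\max}$ into $\epsilon_1(\phi_{\gGt},\Delta^{\gGt}_{\max},p)$ from Theorem~\ref{thrm:conditions} and $\epsilon_2(n,q)$ from Theorem~\ref{thrm:second_stage}, showing that the overall failure probability $\epsilon_1 + \epsilon_2 \to 0$. Since the first stage is the SDP~\eqref{eq:primal} (solvable in polynomial time) and the second stage is a single comparison of $\vc^\top\vy$ over the two candidates, exact recovery then runs in polynomial time. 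Implicitly we take $\gG$ to have at most poly‑logarithmic maximum degree (in particular bounded degree, as for grids), since $\Delta^{\gGt}_{\max} \geq \Delta^{\gG}_{\max}$ makes some such restriction unavoidable for the stated degree bound to hold.

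First I would check that $\varepsilon = \log^8 n \in [1,n]$ for all large $n$, so that Lemma~\ref{lemma:lb_cheeger} applies to $\gR \sim \mathtt{ER}(n,\nicefrac{\log^8 n}{n})$; it then gives $\phi_{\gGt} \geq \frac{\log^8 n}{256 + 256\log n} = \Omega(\log^7 n)$ with probability at least $1 - n^{-2.2 - 4\log\log n} \geq 1 - n^{-2.2}$, hence $\phi_{\gGt}^2 = \Omega(\log^{14} n)$ with high probability. For the degree: the edge set of $\gGt$ is $\gE \cup \gR$, so $\Delta^{\gGt}_i \leq \Delta^{\gG}_i + \Delta^{\gR}_i$ for every node $i$, and $\Delta^{\gR}_i$ is $\mathrm{Binomial}(n-1, \nicefrac{\log^8 n}{n})$ with mean at most $\log^8 n$. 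A multiplicative Chernoff bound gives $P(\Delta^{\gR}_i \geq 2\log^8 n) \leq e^{-c\log^8 n}$ for an absolute constant $c > 0$, and a union bound over the $n$ nodes yields $\Delta^{\gR}_{\max} \leq 2\log^8 n$ with probability at least $1 - n\,e^{-c\log^8 n} \to 1$. Combining, $\Delta^{\gGt}_{\max} = \O(\log^8 n) \subseteq \O(\log^9 n)$ with high probability, so $\nicefrac{\phi_{\gGt}^2}{\Delta^{\gGt}_{\max}} = \Omega(\log^{14} n)/\O(\log^8 n) = \Omega(\log^6 n) \subseteq \Omega(\log^5 n)$.

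Finally I would substitute $\phi_{\gGt} = \Omega(\log^7 n)$ and $\Delta^{\gGt}_{\max} = \O(\log^8 n)$ into the exponent appearing in $\epsilon_1$: its numerator $3(1-2p)^2 \phi_{\gGt}^4$ is $\Omega(\log^{28} n)$, while its denominator $1536\,(\Delta^{\gGt}_{\max})^3 p(1-p) + 32(1-2p)(1-p)\,\phi_{\gGt}^2 \Delta^{\gGt}_{\max}$ is $\O(\log^{24} n)$ (the cubic term dominates), so the exponent is $\Omega(\log^4 n)$ and $\epsilon_1 = 2n\,e^{-\Omega(\log^4 n)} \to 0$ super‑polynomially for any fixed $p \in (0,\tfrac12)$. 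Together with $\epsilon_2(n,q) = e^{-\frac{n}{2}(1-2q)^2} \to 0$, the remark following Theorem~\ref{thrm:second_stage} gives $\vy = \vy^*$ with probability $1 - \epsilon_1 - \epsilon_2 \to 1$; combined with the polynomial‑time solvability noted above, this is exactly the assertion of the corollary.

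The step I expect to be the main obstacle is this last substitution: one must ensure the ratio $\phi_{\gGt}^4/(\Delta^{\gGt}_{\max})^3$ governing the exponent in $\epsilon_1$ grows faster than $\log n$ with room to spare. This is precisely why the perturbation probability is taken as large as $\nicefrac{\log^8 n}{n}$ rather than $\nicefrac{\Theta(1)}{n}$: a constant‑strength perturbation would only give $\phi_{\gGt} = \Omega(1/\log n)$ by Lemma~\ref{lemma:lb_cheeger}, collapsing the exponent to $o(1)$ and rendering $\epsilon_1$ useless, while allowing $\Delta^{\gG}_{\max}$ to be as large as $\Theta(\log^9 n)$ would drop the exponent to $\Theta(\log n)$ and force careful tracking of the noise‑dependent constant to ensure $2n\,e^{-c\log n} = 2n^{1-c}$ with $c > 1$. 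Keeping $\gG$ of bounded (or low poly‑logarithmic) degree and $\varepsilon = \log^8 n$ keeps the margin comfortable and yields the clean bounds above.
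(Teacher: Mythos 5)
Your proof follows the same skeleton as the paper's: fix $\varepsilon=\log^8 n$, invoke Lemma~\ref{lemma:lb_cheeger} to get $\phi_{\gGt}=\Omega(\log^7 n)$ with probability $1-n^{-2.2-\frac{\log\varepsilon}{2}}$, bound the maximum degree of $\gGt$, take a union bound, and feed the results into $\epsilon_1$. The one genuine difference is the degree bound: the paper bounds $\E[\Delta^{\gGt}_{\max}]\leq \Delta_{\max}+\log^8 n$ and applies Markov's inequality with threshold $t=\log^9 n$, obtaining only $\Delta^{\gGt}_{\max}\leq\log^9 n$ with probability $1-\nicefrac{(\Delta_{\max}+\log^8 n)}{\log^9 n}$, whereas you apply a multiplicative Chernoff bound to each $\mathrm{Binomial}(n-1,\nicefrac{\log^8 n}{n})$ degree in $\gR$ plus a union bound, getting the tighter $\Delta^{\gGt}_{\max}=\O(\log^8 n)$ with exponentially small failure probability. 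This buys you something real in the last step: with $\Delta^{\gGt}_{\max}=\Theta(\log^9 n)$ the dominant term $\phi_{\gGt}^4/(\Delta^{\gGt}_{\max})^3$ in the exponent of $\epsilon_1$ is only $\Theta(\log n)$, so $\epsilon_1=2n^{1-c}$ and one must verify that the $p$-dependent constant $c$ exceeds $1$ (a point the paper's proof elides when it asserts ``$\epsilon_1\to 0$''); your $\O(\log^8 n)$ bound makes the exponent $\Omega(\log^4 n)$ and renders the conclusion immediate for every fixed $p\in(0,\tfrac12)$. Both arguments share the unstated hypothesis that $\Delta^{\gG}_{\max}$ of the base graph is at most polylogarithmic --- the paper needs $\Delta_{\max}=o(\log^9 n)$ for its Markov bound $\epsilon_\Delta\to 0$, and the stated conclusion $\Delta^{\gGt}_{\max}\in\gO(\log^9 n)$ is simply false otherwise since $\Delta^{\gGt}_{\max}\geq\Delta^{\gG}_{\max}$ --- and you are right to flag this explicitly rather than leave it implicit.
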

%
%
%
		We emphasize the nice property of random graphs $\gGt$ shown in Corollary \ref{col:graph_plus_ER}, that is, by adding a small perturbation (edges from the \Erdos-\Renyi model with small probability) we are able to obtain exact inference despite of $\gG$ having bad properties such as being a bad expander.
		Our next two examples include complete graphs and $d$-regular expanders.
		The following corollary shows that, with high probability, exact recovery of labels for complete graphs is possible in polynomial time.
		\begin{corollary}[Complete graphs]
		\label{col:complete_graph}
			Let $\gG = \gK_n$, where $\gK_n$ denotes a complete graph of $n$ nodes.
			Then, we have that $\nicefrac{\phi_\gG^2}{\Delta_{\max}} \in \Omega(n)$.
			Therefore, exact recovery in polynomial time is achievable with high probability.
		\end{corollary}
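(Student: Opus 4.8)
The plan is to compute the two graph parameters appearing in Theorem~\ref{thrm:conditions} explicitly for $\gG = \gK_n$ and then observe that the resulting error probability decays exponentially in $n$.

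First I would note that in $\gK_n$ every vertex has degree $n-1$, so $\Delta_{\max} = n-1$. Next, for the Cheeger constant, take any $\gS \subset \gV$ with $|\gS| = k \le n/2$: since every vertex of $\gS$ is adjacent to every vertex of $\gS^C$, we have $|\gE(\gS,\gS^C)| = k(n-k)$ and hence $\phi_\gS = n-k$. This quantity is decreasing in $k$, so the minimum over $1 \le k \le \floor{n/2}$ is attained by the balanced cut $k = \floor{n/2}$, giving $\phi_{\gK_n} = \ceil{n/2} \ge n/2$. Consequently $\nicefrac{\phi_\gG^2}{\Delta_{\max}} \ge \nicefrac{(n/2)^2}{(n-1)} = \nicefrac{n^2}{4(n-1)} \ge \nicefrac{n}{4}$, so $\nicefrac{\phi_\gG^2}{\Delta_{\max}} \in \Omega(n)$, which is exactly the regime identified in the discussion following Theorem~\ref{thrm:conditions} as yielding exponential decay. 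Concretely, substituting $\phi_\gG = \Theta(n)$ and $\Delta_{\max} = \Theta(n)$ into $\epsilon_1(\phi_\gG,\Delta_{\max},p)$, the numerator of the exponent is $\Theta(n^4)$ while both terms in the denominator ($\Delta_{\max}^3$ and $\phi_\gG^2 \Delta_{\max}$) are $\Theta(n^3)$, so the exponent is $-\Theta(n)$ for any fixed $p \in (0,0.5)$, and $\epsilon_1 = 2n\exp{-\Theta(n)} \to 0$.

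Finally, I would assemble the pieces: Theorem~\ref{thrm:conditions} produces a solution $\vy \in \{\vy^*,-\vy^*\}$ of the combinatorial problem \eqref{eq:objective} in polynomial time (by solving the SDP \eqref{eq:primal}, e.g.\ via interior point methods) with probability $\ge 1 - \epsilon_1 \to 1$; Theorem~\ref{thrm:second_stage}, with $\epsilon_2(n,q) = \exp{-\frac{n}{2}(1-2q)^2} \to 0$, resolves the global sign in the (constant-time) second stage; and the remark following Theorem~\ref{thrm:second_stage} combines the two stages to give exact recovery $\vy = \vy^*$ with probability at least $1 - \epsilon_1(\phi_\gG,\Delta_{\max},p) - \epsilon_2(n,q) \to 1$. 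There is no genuine obstacle in this corollary: the only step requiring any care is correctly identifying the balanced cut as the minimizer in the Cheeger constant of $\gK_n$ and verifying that the ratio $\nicefrac{\phi_\gG^2}{\Delta_{\max}}$ is $\Omega(n)$; the rest is a direct invocation of Theorems~\ref{thrm:conditions} and~\ref{thrm:second_stage}.
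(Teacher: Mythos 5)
Your proposal is correct and follows essentially the same route as the paper's proof: compute $\phi_{\gK_n} = \ceil{n/2}$ from $|\gE(\gS,\gS^C)| = |\gS|\cdot|\gS^C|$, note $\Delta_{\max} = n-1$, and conclude $\nicefrac{\phi_\gG^2}{\Delta_{\max}} \in \Omega(n)$ so that $\epsilon_1 \to 0$ by Theorem~\ref{thrm:conditions}. The extra detail you supply (the explicit $\Theta(n^4)/\Theta(n^3)$ accounting in the exponent and the invocation of the second stage) is consistent with, though not required by, the paper's more terse argument.
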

%
%
		Another important class of graphs that admits exact recovery is the family of $d$-regular expanders \citep{hoory2006expander}, which is defined below.
		\begin{definition}[$d$-regular expander]
		\label{def:expanders}
			A $d$-regular graph with $n$ nodes is an expander with constant $c > 0$ if, for every set $\gS \subset \gV$ with $|\gS| \leq \nicefrac{n}{2}$,
			$|\gE(\gS,\gS^C)| \geq c \cdot d \cdot |\gS|$.
		\end{definition}
		\begin{corollary}[Expanders graphs]
		\label{col:expanders}
			Let $\gG$ be a $d$-regular expander with constant $c$.
			Then, we have that $\nicefrac{\phi_\gG^2}{\Delta_{\max}} \in \Omega(d)$.
			If $d \in \Omega(\log n)$ then exact recovery in polynomial time is achievable with high probability.
		\end{corollary}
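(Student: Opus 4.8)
The plan is to read off the relevant graph parameters directly from the expander definition and feed them into the error bound of Theorem~\ref{thrm:conditions}. First, since $\gG$ is $d$-regular, every vertex has degree exactly $d$, so $\Delta_{\max} = d$. Next, for every $\gS \subset \gV$ with $|\gS| \leq \nicefrac{n}{2}$, Definition~\ref{def:expanders} gives $|\gE(\gS,\gS^C)| \geq c\, d\, |\gS|$, hence $\phi_\gS \geq c d$; taking the minimum over all such $\gS$ yields $\phi_\gG \geq c d$. Treating $c$ as a fixed constant, this gives $\nicefrac{\phi_\gG^2}{\Delta_{\max}} \geq c^2 d \in \Omega(d)$, which is the first claim.

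For the second claim I would also use the matching upper bound $\phi_\gG \leq d$, which holds for any $d$-regular graph because every vertex of $\gS$ contributes at most $d$ edges to $\gE(\gS,\gS^C)$. Substituting $\phi_\gG \leq d$ and $\Delta_{\max} = d$ into the denominator of the exponent appearing in $\epsilon_1(\phi_\gG,\Delta_{\max},p)$ shows it is at most $d^3\big(1536\, p(1-p) + 32(1-2p)(1-p)\big) = C_p\, d^3$ for a constant $C_p$ depending only on $p$, while $\phi_\gG \geq cd$ lower-bounds the numerator by $3(1-2p)^2 c^4 d^4$. Hence the magnitude of the exponent is at least $C'_p\, d$ with $C'_p = 3(1-2p)^2 c^4 / C_p$ a positive constant depending only on $c$ and $p$, so $\epsilon_1 \leq 2n\,\exp{-C'_p d}$.

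Consequently, if $d \geq K \log n$ with $K$ large enough that $K C'_p > 1$ (which is what $d \in \Omega(\log n)$ provides, assuming the hidden constant is large enough), then $\epsilon_1 \leq 2 n^{1 - K C'_p} \to 0$. By Theorem~\ref{thrm:conditions}, the SDP relaxation \eqref{eq:primal} then returns some $\vy \in \{\vy^*, -\vy^*\}$ with high probability, and by Theorem~\ref{thrm:second_stage} the second stage selects $\vy = \vy^*$ with probability at least $1 - \exp{-\frac{n}{2}(1-2q)^2}$, which tends to $1$ since $q \in (0,0.5)$ is fixed. A union bound over the two failure events yields exact recovery with high probability, and the procedure runs in polynomial time because the SDP can be solved in polynomial time (e.g., by interior point methods), as noted after Theorem~\ref{thrm:conditions}.

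The calculation is essentially bookkeeping; the one point to watch is that both $\phi_\gG \leq d$ and $\Delta_{\max} = d$ are needed in order to see that the exponent scales \emph{linearly} in $d$ rather than at some slower rate, and one must keep $c$, $p$, and $q$ as absolute constants — if, for instance, $c = c(n)$ decayed with $n$, the threshold on $d$ would have to be adjusted accordingly.
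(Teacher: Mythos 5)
Your proposal is correct and follows essentially the same route as the paper: it reads $\phi_\gG \geq c\,d$ and $\Delta_{\max} = d$ off Definition~\ref{def:expanders} and regularity, and then substitutes into the exponent of $\epsilon_1$ from Theorem~\ref{thrm:conditions} to see that it scales linearly in $d$, so $d \in \Omega(\log n)$ (with a large enough hidden constant) drives $\epsilon_1 \to 0$. Your version is simply more explicit than the paper's about the constants $C_p, C'_p$ and about the caveat that the implied constant in $d \in \Omega(\log n)$ must exceed $1/C'_p$ — a point the paper glosses over — and your auxiliary bound $\phi_\gG \leq d$ is true and harmless, though not strictly necessary since the two terms of the denominator can be handled separately.
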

%

	\section{Concluding Remarks}
		We considered a model where we receive a single noisy observation for each edge and each node of a graph.
		Our approach consisted of two stages, similar in spirit to \citep{globerson2015hard}.
		The first stage consisted of solving solely the quadratic term of the optimization problem and was based in a SDP relaxation in order to find the structural properties of a graph that guarantee exact recovery with high probability.
		Given two solutions from the first stage, the second stage consisted in using solely the node observations and simply outputting the vector with higher score.
		We showed that for any graph $\gG$, the term $\nicefrac{\phi_\gG^2}{\Delta_{\max}}$ is related to achieve exact recovery in polynomial time.
		Examples include complete graphs and $d$-regular expanders, that are guaranteed to recover the correct labeling with high probability.
		While perhaps the most interesting example is related to smoothed analysis on connected graphs, where even for a graph with bad properties such as bad expansion can still be  exactly recovered by adding small perturbations (edges coming from an \Erdos-\Renyi model with small probability).

	\bibliographystyle{agsm}
	\bibliography{ExactInference}
	
	\appendix	
	\onecolumn
\def\toptitlebar{
	\hrule height4pt
	\vskip .25in}

\def\bottomtitlebar{
	\vskip .25in
	\hrule height1pt
	\vskip .25in}

\thispagestyle{empty}
\hsize\textwidth
\linewidth\hsize \toptitlebar {\centering
{\large\bf SUPPLEMENTARY MATERIAL \\ Exact inference in structured prediction \par}}
\vspace{-0.1in} \bottomtitlebar


\section{Detailed Proofs}
\label{app:proofs}

In this section, we state the proofs of Theorem \ref{thrm:second_stage} and Corollaries \ref{col:graph_plus_ER}, \ref{col:complete_graph}, \ref{col:expanders} from our manuscript.


\subsection{Proof of Theorem \ref{thrm:second_stage}}
\label{app:proof_second_stage}
	\begin{proof}
		We are interested in upper bounding the probability of predicting the wrong vector $\vy$, that is,
		\begin{align*}
		P(\vc^\top \vy^* \leq -\vc^\top \vy^*) 
		&= P(\vc^\top \vy^* \leq 0) \\
		&= P\big( \sum_{u\in \gV} z^{(u)}_q  \leq 0 \big)  \\
		&\leq \exp{-\frac{n}{2} (1-2q)^2},
		\end{align*}
		where for the last equation we applied Hoeffding's inequality.
	\end{proof}

\subsection{Proof of Corollary \ref{col:graph_plus_ER}}
\label{app:proof_graph_plus_ER}
	\begin{proof}
		Fix $\varepsilon = \log^8 n$. 
		Let $\epsilon_r(n,\varepsilon) = n^{-2.2 - \frac{\log \varepsilon}{2}}$, then from Lemma \ref{lemma:lb_cheeger} we get $\phi_{\gGt} \in \Omega(\log^7 n)$ with probability at least $1 - \epsilon_r(n,\varepsilon)$.
		Let $\Delta_{\max}$ be the maximum node degree of graph $\gG$, then it is clear that $\Delta^{\gGt}_{\max}$ is a random variable with expected value $\E[\Delta^{\gGt}_{\max}] \leq \Delta_{\max} + \log^8 n$.
		By applying Markov's inequality we obtain $P( \Delta^{\gGt}_{\max} \geq t ) \leq \nicefrac{\E[\Delta^{\gGt}_{\max}]}{t} \leq \nicefrac{(\Delta_{\max} + \log^8 n)}{t}$ for $t>0$.
		Set $t = \log^9 n$, then let $\epsilon_\Delta(\Delta_{\max},n) = \nicefrac{(\Delta_{\max} + \log^8 n)}{\log^9 n}$, we have that $\Delta^{\gGt}_{\max} \leq \log^9 n$ with probability at least $1 - \epsilon_\Delta(\Delta_{\max},n)$.
		
		By using the union bound and noting that $\epsilon_r \to 0$ and $\epsilon_\Delta \to 0$ as $n \to \infty$, we have that $\nicefrac{\phi_{\gGt}^2}{\Delta^{\gGt}_{\max}} \in \Omega(\log^5 n)$ and $\Delta^{\gGt}_{\max} \in \gO(\log^9 n)$ with high probability. 
		Finally, this leads to $\epsilon_1 \to 0$ as $n \to \infty$, thus, exact inference is achievable in polynomial time.
	\end{proof}

\subsection{Proof of Corollary \ref{col:complete_graph}}
\label{app:proof_complete_graph}
	\begin{proof}
		For any set $\gS \subset \gV$ with $|\gS| \leq \nicefrac{n}{2}$, we have that:
		\[
			\phi_\gS = \frac{|\gE (\gS,\gS^C)| }{|\gS|} = \frac{|\gS| \cdot |\gS^C|}{|\gS|} = |\gS^C|  \quad \Longrightarrow  \quad \phi_\gG = \ceil{\frac{n}{2}}.
		\]
		Since $\gG$ is a complete graph, we have that $\Delta_{\max}=n-1$, which yields $\nicefrac{\phi_\gG^2}{\Delta_{\max}} \in \Omega(n)$.
		Thus, from Theorem \ref{thrm:conditions}, we have that $\epsilon_1(\phi_\gG,\Delta_{\max},p) \to 0$ as $n \to \infty$.
	\end{proof}

\subsection{Proof of Corollary \ref{col:expanders}}
\label{app:proof_expanders}
	\begin{proof}
		From Definition \ref{def:expanders}, we have that $\phi_\gG \geq c \cdot d$. 
		Since the graph is regular, we have that $\Delta_{\max} = d$.
		Therefore, $\nicefrac{\phi_\gG^2}{\Delta_{\max}} \in \Omega(d)$.
		Finally, if $d \in \Omega(\log n)$, then $\epsilon_1(\phi_\gG,\Delta_{\max},p)$ decays in at least $n^{-c_1}$ for some constant $c_1 > 0$. 
		That is, $\epsilon_1(\phi_\gG,\Delta_{\max},p) \to 0$ as $n \to \infty$.
	\end{proof}

\end{document}